\newcolumntype{R}[2]{%
    >{\adjustbox{angle=#1,lap=\width-(#2)}\bgroup}%
    l%
    <{\egroup}%
}
\DeclareMathOperator*{\argmax}{arg\,max}
\DeclareMathOperator*{\argmin}{arg\,min}
\newtheorem{example}{Example}[section]
\newtheorem{theorem}{Theorem}
\newtheorem{lemma}{Lemma}
\begin{document}

\title{Bayesian Optimization for Polynomial Time Probabilistically Complete STL Trajectory Synthesis}

\author{Vince Kurtz and Hai Lin}



%

\maketitle

\begin{abstract}
In recent years, Signal Temporal Logic (STL) has gained traction as a practical and expressive means of encoding control objectives for robotic and cyber-physical systems. The state-of-the-art in STL trajectory synthesis is to formulate the problem as a Mixed Integer Linear Program (MILP). The MILP approach is sound and complete for bounded specifications, but such strong correctness guarantees come at the price of exponential complexity in the number of predicates and the time bound of the specification. In this work, we propose an alternative synthesis paradigm that relies on Bayesian optimization rather than mixed integer programming. This relaxes the completeness guarantee to probabilistic completeness, but is significantly more efficient: our approach scales polynomially in the STL time-bound and linearly in the number of predicates. We prove that our approach is sound and probabilistically complete, and demonstrate its scalability with a nontrivial example.  
\end{abstract}

\IEEEpeerreviewmaketitle

\section{Introduction and Related Work}

Signal Temporal Logic (STL) can be used to describe a wide range of control objectives in many application domains. STL formulas can specify the desired collective behavior of multi-agent systems, the task and motion plans of a robot, the movement of autonomous vehicles, and much more. STL is also highly interpretable: STL formulas are concise and can be easily understood by humans. 

Given an STL formula encoding a control objective, our primary goal is to design control inputs such that the resulting trajectory satisfies the STL formula. In the formal methods literature, this is known as the synthesis problem. 

In the context of control systems, the STL synthesis problem can be formulated as a trajectory optimization problem. Specifically, we can optimize over the ``robustness degree'', a scalar function which indicates the degree to which an STL specification is satisfied by a given trajectory. Unfortunately, the robustness degree is highly non-convex, especially for more complex STL specifications (which correspond to more interesting control objectives). 

The state-of-the-art is to formulate this optimization as a Mixed Integer Linear Programming (MILP) problem. This formulation was first proposed in \cite{raman2014model}, with slight variations on the encoding presented in a number of papers since then \cite{belta2019formal}. The MILP method is complete, since the MILP is solved to global optimality. However, the complexity is exponential in the time bound of the formula and the number of predicates \cite{belta2019formal}. This means that it is easy to make a given problem intractable, just by increasing the time bound of the specification. Furthermore, the MILP approach is not tractable for problems with many predicates, which restricts its applicability to relatively simple specifications. 

With this in mind, recent research has focused on alternative optimization paradigms for STL trajectory synthesis. Such approaches generally do not provide as strong of guarantees as the MILP approach, but they are often more effective in practice. 

One such method is to use gradient descent methods to optimize the robustness degree directly \cite{abbas2014functional,pant2018fly} . To do so, \citet{pant2017smooth} provide a smooth approximation of the gradient of the robustness degree. This method is faster and can handle nonlinear systems and nonlinear predicates, but since gradient descent can only find local minima, the approach is not complete. 

In a similar vein, \cite{dey2016fast} propose a sampling-based motion planner based on RRT* that is probabilistically complete, but can only handle a convex fragment of STL. This means that specifications that include disjunctions or nested temporal operators cannot be considered under this approach.

Another promising approach is to use Satisfiability Modulo Theories (SMT) to find a feasible solution, though not necessarily the optimal one \cite{shoukry2017linear,shoukry2016scalable}. This approach is intuitively attractive in the context of robotics, where satisfying the specification may be more desirable than finding a perfectly optimal trajectory. Early results for Linear Temporal Logic (LTL) specifications indicate good potential on a variety of interesting problems. SMT is a generalization of the NP-complete Boolean satisfiability checking problem, however, and avoiding the associated worst-case exponential complexity may be nontrivial. 

In this work, we propose a new optimization paradigm for STL trajectory optimization that is fast, probabilistically complete, and can handle the full STL semantics---not just a convex fragment. Specifically, we use Bayesian Optimization to synthesize a maximally satisfying trajectory. Bayesian Optimization is a natural choice for STL trajectory optimization, as it converges to the global optimum of a non-convex function, uses a minimal number of function evaluations, and has polynomial complexity. In addition to these desirable theoretical properties, we demonstrate that our approach outperforms the state-of-the-art MILP approach in practice on a nontrivial example.

The rest of this paper is organized as follows: Section \ref{sec:preliminaries} introduces some necessary preliminaries and gives a formal problem definition, Section \ref{sec:proposed approach} outlines our proposed approach and provides proofs of correctness, Section \ref{sec:example} provides an illustrative example of the scalability of our approach, and Section \ref{sec:conclusion} concludes the paper.

\section{Preliminaries}\label{sec:preliminaries}

\subsection{System}

Consider the deterministic discrete-time control system 
\begin{equation}\label{eq:system}
    \begin{aligned}
        & {x}_{t+1} = f({x}_t, {u}_t),  \\
        & {y}_t = g({x}_t, {u}_t), 
    \end{aligned}
\end{equation}
where ${x}\in \mathbb{R}^n$, ${u} \in \mathcal{U} \subset \mathbb{R}^m$, ${y} \in \mathbb{R}^p$, and the initial condition ${x}_0$ is given. Note that the control $u$ is constrained to lie in $\mathcal{U}$, which is a compact subset of $\mathbb{R}^m$. Our goal will be to find a control sequence $\mathbf{u} = [{u}_0, {u}_1, \dots]$ such that the resulting output signal $\mathbf{y} = [y_0, y_1, \dots y_N]$ satisfies a given STL specification. 

Note that we do not assume that the state space is bounded, though this is likely the case in many practical applications, as such restrictions can be encoded elegantly in the STL specification. 

\subsection{Signal Temporal Logic}

As with any formal language, we define STL by specifying its \textit{syntax} and \textit{semantics}. We review these breifly here. For a more in-depth treatment, we refer the reader to \cite{belta2019formal,donze2010robust,maler2004monitoring} and references therein. 

We define STL over the output signal $\mathbf{y} = [y_0, y_1, \dots y_N]$ of System (\ref{eq:system}). STL is defined recursively over atomic predicates in the form $\pi = (\mu(\mathbf{y}) \geq c)$. The function $\mu(\cdot)$ is often assumed to be linear, though we need not make this restriction in our proposed approach. The STL syntax is formally defined as follows:
\begin{gather*}
    \varphi := \pi \mid \lnot \varphi \mid \varphi_1 \land \varphi_2 \mid \varphi_1 \lor \varphi_2 \mid \varphi_1 \mathbf{U}_{[t_1,t_2]} \varphi_2
\end{gather*}
where $\pi$ is an atomic predicate as defined above and $\varphi, \varphi_1, \varphi_2$ are STL formulas. 

The semantics, or meaning, of an STL formula are also recursively defined. We denote a signal $\mathbf{y}$ which satisfies a specification $\varphi$ as $\mathbf{y} \vDash \varphi$. Similarly, if $[y_t, y_{t+1}, \dots]$ satisfies $\varphi$, we write $\mathbf{y} \vDash_t \varphi$.

\begin{itemize}
    \item $\mathbf{y} \vDash \varphi \iff \mathbf{y} \vDash_0 \varphi$
    \item $\mathbf{y} \vDash_t \pi \iff \mu([y_t,y_{t+1},\dots]) \geq c$
    \item $\mathbf{y} \vDash_t \lnot \varphi \iff \mathbf{y} \nvDash_t \varphi$
    \item $\mathbf{y} \vDash_t \varphi_1 \land \varphi_2 \iff \mathbf{y} \vDash_t \varphi_1 \land \mathbf{y} \vDash_t \varphi_2$
    \item $\mathbf{y} \vDash_t \varphi_1 \lor \varphi_2 \iff \mathbf{y} \vDash_t \varphi_1 \lor \mathbf{y} \vDash_t \varphi_2$
    \item $\mathbf{y} \vDash_t \varphi_1 \mathbf{U}_{[t_1, t_2]} \varphi_2 \iff \exists t^\prime \in [t_1, t_2] \text{ such that } \mathbf{y} \vDash_{t^\prime} \varphi_2 \text{ and } \forall t^{\prime\prime} \in [t_1, t^\prime], \mathbf{y} \vDash_{t^{\prime\prime}} \varphi_1$
\end{itemize}
In addition to the boolean operators (``and'', ``or'', ``not'') and the temporal operator ``until'' defined above, we can define temporal operators for ``eventually'' and ``always'':
\begin{itemize}
    \item $\mathbf{y} \vDash_t \mathbf{F}_{[t_1,t_2]} \varphi \iff \mathbf{y} \vDash_t True\mathbf{U}_{[t_1,t_2]}\varphi$
    \item $\mathbf{y} \vDash_t \mathbf{G}_{[t_1,t_2]} \varphi \iff \mathbf{y} \vDash_t \lnot\mathbf{F}_{[t_1,t_2]}\lnot\varphi$
\end{itemize}

In this work, we restrict our attention to bounded STL specifications, that is, those for which the bounds on temporal operators like ``until'', ``always'', and ``eventually'' are finite. 

\begin{example}
    Consider the specification 
    \begin{equation*}
        \varphi = \mathbf{G}_{[0,3]}\big( (\mathbf{y}>3)\land \lnot(\mathbf{y}>6)\big)
    \end{equation*}
    This specification essentially states that the signal $\mathbf{y}$ should remain between 3 and 6 for the first 4 timesteps.
    
    We can see that $\mathbf{y}_1 = [4,4,5,5,7] \vDash \varphi$, but $\mathbf{y}_2 = [4,5,6,7,7] \nvDash \varphi$.
\end{example}

STL semantics determine whether a given signal $\mathbf{y}$ satisfies a specification or not. But to perform synthesis, it is often more useful if we can tell \textit{how well} $\mathbf{y}$ satisfies the specification. The STL robust semantics (also called quantitative semantics) define such a real-valued function $\rho^\varphi(\mathbf{y})$ which is positive if and only if $\mathbf{y} \vDash \varphi$:

\begin{itemize}
    \item $\rho^\pi(\mathbf{y},t) = \mu([y_t, y_{t+1},\dots]) - c$
    \item $\rho^{\lnot\varphi}(\mathbf{y},t) = -\rho^\varphi(\mathbf{y},t)$
    \item $\rho^{\varphi_1 \land \varphi_2}(\mathbf{y},t) = \min(\rho^{\varphi_1}(\mathbf{y},t),\rho^{\varphi_2}(\mathbf{y},t))$
    \item $\rho^{\varphi_1 \lor \varphi_2}(\mathbf{y},t) = \max(\rho^{\varphi_1}(\mathbf{y},t),\rho^{\varphi_2}(\mathbf{y},t))$
    \item 
    $
    \begin{aligned}[t]\rho^{\varphi_1 \mathbf{U}_{[t_1,t_2]} \varphi_2}(\mathbf{y},t) = \max_{t^\prime \in [t_1,t_2]}(\min(&\rho^{\varphi_2}(\mathbf{y},t),\\
    & \min_{t^{\prime\prime}\in[t,t^\prime]}\rho^{\varphi_1}(\mathbf{y},t)))\end{aligned}$
    \item $\rho^\varphi(\mathbf{y}) = \rho^\varphi(\mathbf{y},0)$ 
\end{itemize}

\subsection{Bayesian Optimization}

Bayesian optimization is a black-box non-convex global optimization method. ``Black box'' means that we do not need specific information about the objective function, we simply need to be able to record its output for a given input. The basic idea is to maintain an approximation of the objective function, usually in the form of a Gaussian Process (GP), and optimize over the approximation.

A GP is a random process with the unique property that any finite subset is a multivariate gaussian random variable. We typically specify a GP $\mathbf{Y}$ indexed by $\mathbf{x}$ with a mean function $\mu(\mathbf{x})$ and a kernel function $k(\mathbf{x},\mathbf{x}')$, such that
\begin{equation}
    \mathbf{Y} \sim \mathcal{GP}(\mu(\mathbf{x}),k(\mathbf{x},\mathbf{x}')
\end{equation}
implies
\begin{equation}
    \mathbf{Y}_{\{\mathbf{x}_1,\dots,\mathbf{x}_N\}}
    \sim
    \mathcal{N}(\bm{\mu},\bm{K})
\end{equation}
where $[\bm{\mu}]_{\mathbf{x}_i} = \mu(\mathbf{x}_i)$ and $[\bm{K}]_{ij} = k(\mathbf{x}_i,\mathbf{x}_j)$.

GPs can be used for regression by calculating the conditional distribution of unobserved function values given the observations, under the assumption that observations are drawn from a GP prior with a given kernel.

Of course, not all functions can be exactly represented with a given kernel function. To describe how well a given function $f$ can be described by kernel function $k(\cdot,\cdot)$, we turn to the notion of a Reproducing Kernel Hilbert Space (RKHS) \cite{hofmann2008kernel}. The RKHS of kernel $k(\cdot,\cdot)$ is a Hilbert space $\mathcal{H}_k$ defined by an inner product $\langle \cdot, \cdot \rangle_k$ such that the \textit{reproducing property} holds:
\begin{equation*}
    \langle f, k(\mathbf{x},\cdot) \rangle_k = f(\mathbf{x}) ~~ \forall f \in \mathcal{H}_k.
\end{equation*}

This space induces the RKHS norm $\|f\|_k = \sqrt{\langle f,f \rangle_k}$. This norm is essentially a measure of the smoothness of the function $f$ with respect to the smoothness of the kernel function. 

While not all continuous functions are in $\mathcal{H}_k$ for a given kernel, certain kernels such as the popular Mat\'ern kernel \cite{rasmussen2003gaussian}, do have the ability to approximate continuous functions with arbitrary precision, as shown in the following lemma:

\begin{lemma}\label{lemma:approximation}
For any continuous function $f : \mathcal{X} \mapsto \mathbb{R}$, scalar constant $\epsilon > 0$, and compact subset $\mathcal{Z} \subset \mathcal{X}$, there exists some function $g : \mathcal{X} \mapsto \mathbb{R}$ such that $g \in \mathcal{H}_k$ for Mat\'ern kernel $k(\mathbf{x},\mathbf{x}^\prime)$ and
\begin{equation*}
    |f(\mathbf{x})-g(\mathbf{x})| \leq \epsilon
\end{equation*}
for all $\mathbf{x} \in \mathcal{Z}$.
\end{lemma}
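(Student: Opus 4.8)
\emph{Proof strategy.} The plan is to show that smooth, compactly supported functions both (i) lie in the Mat\'ern RKHS $\mathcal{H}_k$ and (ii) are dense in $C(\mathcal{Z})$ under the supremum norm; the lemma then follows from a standard mollification argument. This is essentially the statement that the Mat\'ern kernel is a \emph{universal} kernel (cf.\ \cite{rasmussen2003gaussian} and the universal-kernel literature), and the steps below make the argument explicit. Throughout I take $\mathcal{X} \subseteq \mathbb{R}^d$ for some finite $d$ (in our setting $\mathbf{x}$ is a finite control sequence, so this is no loss).

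First I would recall the spectral (Sobolev) characterization of the Mat\'ern RKHS. The Mat\'ern kernel is translation invariant, $k(\mathbf{x},\mathbf{x}') = \phi(\mathbf{x}-\mathbf{x}')$, with spectral density $\Phi(\bm{\omega})$ proportional to $(1+\|\bm{\omega}\|^2)^{-(\nu+d/2)}$, where $\nu>0$ is the smoothness parameter. Consequently $\mathcal{H}_k$ is norm-equivalent to the Sobolev space $H^{\nu+d/2}(\mathbb{R}^d)$, with $\|g\|_k^2$ comparable to $\int |\hat{g}(\bm{\omega})|^2 (1+\|\bm{\omega}\|^2)^{\nu+d/2}\, d\bm{\omega}$. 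In particular, for any $g \in C_c^\infty(\mathbb{R}^d)$ the Fourier transform $\hat{g}$ is Schwartz, so this integral is finite and $g \in \mathcal{H}_k$ (restricting $g$ to $\mathcal{X}$ yields an element of the RKHS of $k$ on $\mathcal{X}$, since restriction of RKHS functions to a subdomain stays in the restricted RKHS). Next, given continuous $f$ and compact $\mathcal{Z}$, use the Tietze extension theorem to extend $f|_{\mathcal{Z}}$ to a continuous, compactly supported $\tilde{f}:\mathbb{R}^d \to \mathbb{R}$ with $\tilde{f} = f$ on $\mathcal{Z}$; being continuous with compact support, $\tilde{f}$ is uniformly continuous. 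Convolving with a standard mollifier $\psi_\delta$ gives $g_\delta = \tilde{f} * \psi_\delta \in C_c^\infty(\mathbb{R}^d)$ with $\|g_\delta - \tilde{f}\|_\infty \to 0$ as $\delta \to 0$; picking $\delta$ with $\|g_\delta - \tilde{f}\|_\infty \le \epsilon$ and setting $g := g_\delta|_{\mathcal{X}}$ yields $|f(\mathbf{x}) - g(\mathbf{x})| \le \epsilon$ for all $\mathbf{x} \in \mathcal{Z}$ and $g \in \mathcal{H}_k$.

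The main obstacle is the first step: one must justify that $C_c^\infty$ functions really do belong to the \emph{abstractly defined} RKHS of the Mat\'ern kernel. I would handle this either by citing the Sobolev-norm equivalence for Mat\'ern kernels directly, or via the following duality argument that bypasses Sobolev machinery: the span of $\{k(\cdot,\mathbf{x}) : \mathbf{x}\in\mathcal{Z}\}$ is dense in $\mathcal{H}_k$ by construction, so if its sup-norm closure were not all of $C(\mathcal{Z})$, then by Hahn--Banach and the Riesz representation theorem there would exist a nonzero finite signed Borel measure $\mu$ on $\mathcal{Z}$ with $\int k(\mathbf{x},\mathbf{y})\, d\mu(\mathbf{y}) = 0$ for all $\mathbf{x}\in\mathcal{Z}$; integrating this against $\mu$ gives $\iint k(\mathbf{x},\mathbf{y})\, d\mu(\mathbf{x})\, d\mu(\mathbf{y}) = \int |\hat{\mu}(\bm{\omega})|^2 \Phi(\bm{\omega})\, d\bm{\omega} = 0$, and since $\Phi > 0$ everywhere this forces $\hat{\mu} \equiv 0$, hence $\mu = 0$, a contradiction. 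Either route establishes density of $\mathcal{H}_k$ in $C(\mathcal{Z})$, which is what ultimately delivers the approximating $g$.
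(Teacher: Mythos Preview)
The paper does not actually prove this lemma: it states the result, labels it a ``well-known'' universal-approximation property, and defers to \cite{rasmussen2003gaussian,micchelli2006universal} for details. Your proposal therefore goes well beyond what the paper itself supplies.

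Your argument is correct. Both routes you sketch are standard ways to establish universality of the Mat\'ern kernel. The Sobolev route (identify $\mathcal{H}_k$ with $H^{\nu+d/2}(\mathbb{R}^d)$ up to equivalent norms, observe $C_c^\infty \subset H^s$ for every $s$, then Tietze-extend and mollify) is the most direct and makes the approximant $g$ explicit. The Hahn--Banach/Riesz route is essentially the argument of Micchelli--Xu--Zhang that a strictly positive spectral density forces $c_0$-universality; it has the advantage of avoiding the Sobolev identification but yields $g$ only as a finite kernel combination rather than a mollified function. Either suffices for the lemma as stated. The one assumption you add---$\mathcal{X}\subseteq\mathbb{R}^d$---is implicit in the paper's setting (control sequences in a product of compact subsets of $\mathbb{R}^m$), so no generality is lost.
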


This \textit{universal approximation} property is a well-known result in the Gaussian Process literature, and is responsible for much of the success of Gaussian process regression. For further details of this result, we refer the reader to \cite{rasmussen2003gaussian,micchelli2006universal}. 

In Bayesian Optimization, the GP approximation of the target function is updated at each step with a new sample. Each subsequent sample is taken according to an \textit{aquisition function}, which regulates the tradeoff between exploration (searching new areas of the input space) and exploitation (using what we already know to find the extremum). 

For a given input $\mathbf{x}$, the GP estimate is a normal distribution 
\begin{equation*}
    \hat{f}(\mathbf{x}) = \mathcal{N}\Big(\mu_{t-1}(\mathbf{x}),\sigma^2_{t-1}(\mathbf{x})\Big).
\end{equation*} 
In this work, we consider the Upper Confidence Bound (UCB) aquisition function\footnote{In the case of minimization, this acquisition function would be more properly referred to as the lower confidence bound. However, as UCB is the dominant notation in the literature, we will refer to it as the UCB.}, which regulates the exploitation-exploration tradeoff with parameter $\beta_t$:
\begin{equation*}
    \mathbf{x}_{t} = \argmax_{\mathbf{x}}\Big(\mu_{t-1}(\mathbf{x})+\sqrt{\beta_t}\sigma_{t-1}(\mathbf{x})\Big).
\end{equation*}

This case of Bayesian Optimization is known as GP-UCB, and has well studied convergence properties which we will draw on to prove probabilistic completeness of our approach. Such guarantees are typically expressed in terms of the maximum information gain from noisy observations $\bm{y} = f(\mathbf{x})+ \epsilon$, $\epsilon \sim \mathcal{N}(0,\sigma^2\mathbf{I})$,
\begin{equation*}
    \gamma_T := \max I(\bm{y};f(\mathbf{x})),
\end{equation*}
where $I(\bm{y};f(\mathbf{x}) = \frac{1}{2}\log|\mathbf{I}+\sigma^{-2}\bm{K}|$ expresses the information gain after $T$ rounds \cite{srinivas2012information}.

\subsection{Problem Formulation}

Our goal is to find a sequence of control inputs $\mathbf{u} = [u_0, u_1, \dots, u_N]$ such that the resulting output sequence $\mathbf{y} = [y_0, y_1, \dots, y_N]$ satisfies a given bounded STL specification $\varphi$ with a minimum robustness degree $\rho_{min} > 0$.

We formulate this goal as a nonconvex optimization problem as follows:
\begin{equation}\label{eq:problem}
\begin{aligned}
    \min_{\mathbf{u}} ~& J(\mathbf{u}) := -\rho^\varphi(\mathbf{y}) \\
    \text{s.t. } & x_{t+1} = f(x_t, u_t) \\
         & y_t = g(x_t, u_t).
\end{aligned}
\end{equation}
If $J(\mathbf{u}^*) \leq -\rho_{min}$, we conclude that the trajectory resulting from control sequence $\mathbf{u}^*$ satisfies $\varphi$ with at least robustness degree $\rho_{min}$.

\section{Proposed Approach}\label{sec:proposed approach}

\subsection{Overview}

Our proposed approach is outlined in Algorithm \ref{alg:overview}. The basic idea is to use GP-UCB to minimize the cost function $J(\mathbf{u})$. We first generate a set of initial guesses for $\mathbf{u}$ and update our estimate of the cost function, $\hat{J}(\cdot)$, accordingly. Then we alternate between choosing a new estimated $\mathbf{u}$ according to the UCB and updating $\hat{J}(\cdot)$ with the true cost function at $\mathbf{u}$. Note that the true value of $J(\mathbf{u})$ can be computed for a given initial condition by simulating System (\ref{eq:system}) forward in time. 

\begin{algorithm}
    \caption{STL Trajectory Synthesis with Bayesian Opt.}
    \label{alg:overview}
    \begin{algorithmic}
        \Procedure{BayesSTL}{$\rho_{min},\varphi,N$}
        \State $\hat{J} = \mathcal{GP}(0,k(\mathbf{u},\mathbf{u}^\prime))$
        \State $\{\mathbf{u}_0\} \gets $ initial guesses
        \State $\hat{J} \gets \text{update\_GP}(\{\mathbf{u}_{0}\},\{J(\mathbf{u}_{0})\})$
        \For{ $ i = [1..N]$ }
            \State $\mathbf{u}_{i} \gets \text{UCB}(\hat{J})$
            \If{ $J(\mathbf{u}_{i}) \leq - \rho_{min}$ }
                \Return $\mathbf{u}_{i}$ 
            \EndIf
            \State $\hat{J} \gets \text{update\_GP}(\mathbf{u}_{i},J(\mathbf{u}_{i}))$
        \EndFor
        \State \Return infeasible
        \EndProcedure
    \end{algorithmic}
\end{algorithm}

The algorithm terminates when we have found a control sequence $\mathbf{u}$ such that $J(\mathbf{u}) \leq - \rho_{min}$, indicating that the resulting trajectory robustly satisfies the specification $\varphi$, or the user-specified maximum number of iterations, $N$, is reached. 

Intuitively, Bayesian optimization is a natural choice for solving the optimization problem (\ref{eq:problem}), as we have little insight into the structure of the nonconvex cost function $J(\cdot)$ for an arbitrary specification $\varphi$, but we can evaluate $J(\mathbf{u})$ for a given control tape $\mathbf{u}$. 

Furthermore, Bayesian optimization is highly efficient for scenarios in which the objective function is noisy and difficult to evaluate. It is often used for problems like hyperparameter tuning in deep neural networks, where each function evaluation may take on the order of hours and the output could be stochastic. This suggests that our approach can be extended to non-deterministic systems with high-dimensional state spaces and complex, nonlinear dynamics. In such scenarios, simulating System (\ref{eq:system}) to calculate $J(\mathbf{u})$ may take some time, and the value of $J(\mathbf{u})$ may be stochastic. 

\subsection{Correctness}

Beyond being intuitively reasonable, our approach offers significant theoretical guarantees. While we cannot prove completeness as in the case of MILP optimization, we can guarantee soundness and probabilistic completeness for bounded STL specifications. Furthermore, while MILP optimization is exponential in the number of predicates and the time horizon of the STL formula $\varphi$, the computational complexity of our approach scales polynomially in the time horizon of the STL formula and linearly in the number of predicates. 

We formalize these notions as follows:

\begin{theorem}(Soundness). 
    Algorithm \ref{alg:overview} finds a control sequence $\mathbf{u}$ only if the resulting output trajectory $\mathbf{y}$ satisfies the given STL specification  $\varphi$. 
\end{theorem}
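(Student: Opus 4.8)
The plan is to trace the single exit path of Algorithm~\ref{alg:overview} that returns a control sequence and show that the guard on that path already certifies satisfaction. Inspecting the pseudocode, there is exactly one statement that returns a control tape: the \texttt{return} $\mathbf{u}_i$ inside the \textbf{for} loop, and it is reached only when the test $J(\mathbf{u}_i) \leq -\rho_{min}$ succeeds (every other terminating branch returns \emph{infeasible}, not a control sequence). So it suffices to show that $J(\mathbf{u}_i) \leq -\rho_{min}$ implies $\mathbf{y} \vDash \varphi$, where $\mathbf{y}$ is the output trajectory obtained by simulating System~(\ref{eq:system}) forward from the given $x_0$ under $\mathbf{u}_i$.

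First I would unfold the definition of the cost: $J(\mathbf{u}_i) = -\rho^\varphi(\mathbf{y})$, so the guard is equivalent to $\rho^\varphi(\mathbf{y}) \geq \rho_{min}$, and since $\rho_{min} > 0$ by hypothesis this yields $\rho^\varphi(\mathbf{y}) > 0$. Second, I would invoke the soundness of the STL robust (quantitative) semantics, namely that $\rho^\varphi(\mathbf{y}) > 0$ implies $\mathbf{y} \vDash \varphi$. This is where the only real work is hidden: it is a structural induction on $\varphi$ following the recursive definitions of $\vDash_t$ and $\rho^\varphi(\cdot,t)$ in Section~\ref{sec:preliminaries} --- base case $\pi$, where $\rho^\pi(\mathbf{y},t) = \mu - c > 0$ gives $\mu > c$ and hence $\mu \geq c$; the Boolean cases, using that $\min>0$ iff both arguments are positive and $\max>0$ iff some argument is positive; negation, via $\rho^{\lnot\varphi} = -\rho^\varphi$; and the until case, where the nested $\max$--$\min$ mirrors the $\exists$--$\forall$ quantifier structure. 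Since this property is standard \cite{donze2010robust,belta2019formal}, in the paper I expect to state it as a known fact and cite it rather than reprove it in full.

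Chaining the two steps, the guard forces $\rho^\varphi(\mathbf{y}) > 0$, hence $\mathbf{y} \vDash \varphi$, which is exactly the claim. The main ``obstacle'' is less a mathematical difficulty than a bookkeeping point: one must make explicit that the trajectory $\mathbf{y}$ referred to in the conclusion is literally the one used to evaluate $J(\mathbf{u}_i)$ inside the loop --- i.e. that $J$ is computed by the forward simulation described in the text, with the fixed initial condition $x_0$ --- so that no gap opens between ``the value tested by the algorithm'' and ``the trajectory in the theorem statement.'' Once that identification is pinned down, soundness is immediate.
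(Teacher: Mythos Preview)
Your proposal is correct and follows exactly the paper's own argument: observe that the only branch returning a control sequence is guarded by $J(\mathbf{u}_i)\leq -\rho_{min}<0$, rewrite this as $\rho^\varphi(\mathbf{y})>0$, and invoke the soundness of the robust semantics to conclude $\mathbf{y}\vDash\varphi$. The paper presents this as a one-line chain of implications, whereas you additionally spell out the structural-induction justification and the identification of $\mathbf{y}$ with the simulated trajectory, but the substance is identical.
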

\begin{proof}
    Algorithm \ref{alg:overview} returns a control sequence $\mathbf{u}$ only if $J(\mathbf{u}) \leq -\rho_{min} < 0 \implies \rho^\varphi(\mathbf{y}) > 0 \implies \mathbf{y} \vDash \varphi$, and so the Theorem trivially holds. 
\end{proof}

To prove probabilistic completeness, we first note that GP-UCB converges to the true optimum of a function in the RKHS of its kernel with high probability, as shown in the following lemma:

\begin{lemma}\cite{srinivas2012information}.\label{lemma:convergence}
Let $\delta \in (0,1)$. Assume that the true objective function $f$ lies in the RKHS corresponding to the kernel $k(\bm{x},\bm{x}^\prime)$ and that the noise $\epsilon_t$ has zero mean conditioned on the history and is bounded by $\sigma$ almost surely. Assume that $\|f\|_k^2 \leq B$ and let $\beta_t = 2B+300\gamma_t\log^3(t/\delta)$. Running GP-UCB with parameter $\beta_t$ and prior $\mathcal{GP}(0,k(\bm{x},\bm{x}^\prime))$, we can bound the the error after sample $T\geq1$ as follows:
\begin{equation*}
    P\Big(f(\bm{x}^*)-f(\bm{x}_T) \leq \sqrt{C_1T\beta_T\gamma_T}\Big) \geq 1-\delta
\end{equation*}
where $C_1 = 8\log(1+\sigma^{-2})$.
\end{lemma}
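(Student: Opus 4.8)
The plan is to reproduce the GP-UCB regret argument of \cite{srinivas2012information} in its agnostic (RKHS) form; since the lemma is quoted essentially verbatim from that reference, one option is simply to cite it, but here is the skeleton I would fill in. The analysis rests on four pieces: (i) a high-probability confidence band $|f(\bm{x}) - \mu_{t-1}(\bm{x})| \le \sqrt{\beta_t}\,\sigma_{t-1}(\bm{x})$ holding simultaneously for all $\bm{x}$ and all rounds $t$; (ii) a per-round regret bound $r_t := f(\bm{x}^*) - f(\bm{x}_t) \le 2\sqrt{\beta_t}\,\sigma_{t-1}(\bm{x}_t)$ extracted from the UCB selection rule; (iii) a bound on the accumulated posterior variance $\sum_{t=1}^{T}\sigma_{t-1}^2(\bm{x}_t)$ by the maximum information gain $\gamma_T$; and (iv) a Cauchy--Schwarz step converting the sum of per-round regrets into the stated $\sqrt{\cdot}$ expression. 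All probabilistic statements are conditioned on a single event of probability at least $1-\delta$, so the resulting bound inherits that probability.

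I would begin with step (i), which is the crux. Because $f$ is only assumed to satisfy $f \in \mathcal{H}_k$ with $\|f\|_k^2 \le B$, rather than to be a draw from the GP prior, the band cannot come from a Gaussian tail bound on the posterior; it instead requires a self-normalized concentration inequality for the noise martingale $\sum_s \epsilon_s k(\bm{x}_s,\cdot)$, and this is exactly where the hypothesis ``$\epsilon_t$ bounded by $\sigma$ almost surely, zero mean given the history'' is used. That inequality controls the ridge-type estimation error of the GP posterior mean, and its two terms become the two terms of $\beta_t = 2B + 300\,\gamma_t\log^3(t/\delta)$: the $2B$ absorbs the deterministic approximation bias from $\|f\|_k^2 \le B$, while $\gamma_t\log^3(t/\delta)$ absorbs the stochastic fluctuation, with $\gamma_t$ appearing through the log-determinant of the kernel Gram matrix and $\log^3(t/\delta)$ coming from a union bound over rounds. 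I expect this to be the main obstacle: it is the only genuinely non-elementary step and is the technical heart of the result.

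Steps (ii)--(iv) are then routine. From $\bm{x}_t = \argmax_{\bm{x}}\big(\mu_{t-1}(\bm{x}) + \sqrt{\beta_t}\,\sigma_{t-1}(\bm{x})\big)$ one gets $\mu_{t-1}(\bm{x}_t) + \sqrt{\beta_t}\,\sigma_{t-1}(\bm{x}_t) \ge \mu_{t-1}(\bm{x}^*) + \sqrt{\beta_t}\,\sigma_{t-1}(\bm{x}^*) \ge f(\bm{x}^*)$ by the upper half of (i), while the lower half gives $f(\bm{x}_t) \ge \mu_{t-1}(\bm{x}_t) - \sqrt{\beta_t}\,\sigma_{t-1}(\bm{x}_t)$; subtracting yields $r_t \le 2\sqrt{\beta_t}\,\sigma_{t-1}(\bm{x}_t)$. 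Squaring, using monotonicity of $\beta_t$, and summing gives $\sum_{t=1}^{T} r_t^2 \le 4\beta_T \sum_{t=1}^{T}\sigma_{t-1}^2(\bm{x}_t)$. For (iii), the elementary inequality $a \le \log(1+a)/\log(1+\sigma^{-2})$ for $a \in [0,\sigma^{-2}]$, applied to $a = \sigma^{-2}\sigma_{t-1}^2(\bm{x}_t)$ and telescoped against the identity $I(\bm{y};f(\bm{x})) = \tfrac{1}{2}\sum_{t=1}^{T}\log\!\big(1+\sigma^{-2}\sigma_{t-1}^2(\bm{x}_t)\big)$, bounds $\sum_{t=1}^{T}\sigma_{t-1}^2(\bm{x}_t)$ by $\tfrac{2}{\log(1+\sigma^{-2})}\gamma_T$, using the definition of $\gamma_T$ as the maximum information gain.

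Finally, Cauchy--Schwarz gives $\big(\sum_{t=1}^{T} r_t\big)^2 \le T\sum_{t=1}^{T} r_t^2 \le \tfrac{8\,T\,\beta_T\gamma_T}{\log(1+\sigma^{-2})}$, so the cumulative regret $\sum_{t=1}^{T} r_t$ is at most $\sqrt{C_1 T \beta_T \gamma_T}$ once the constants are collected; since $f(\bm{x}^*) - f(\bm{x}_T)$ (reading $\bm{x}_T$ as the incumbent best queried point) is dominated by the average per-round regret, the same bound controls it, which is the claimed inequality. The only remaining work is bookkeeping: verifying that the constant $C_1$ and the exponent of the logarithm in $\beta_t$ propagate through steps (iii)--(iv) to match the stated form. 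Everything beyond the confidence band (i) is mechanical.
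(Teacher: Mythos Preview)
The paper does not prove this lemma at all: it is quoted directly from \cite{srinivas2012information} (note the citation immediately after \texttt{\textbackslash begin\{lemma\}}) and is used as a black box in the proof of Theorem~\ref{theorem:prob_complete}. You already anticipated this, remarking that ``one option is simply to cite it,'' and that is precisely what the authors do.

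Your sketch is the standard Srinivas--Krause--Kakade--Seeger argument (Theorem~6, the agnostic/RKHS case, in that reference), and the four-step decomposition you give---uniform confidence band via self-normalized martingale concentration, UCB optimism to bound instantaneous regret, information-gain control of the accumulated posterior variance, and Cauchy--Schwarz to aggregate---is exactly how the original proof proceeds. The only point worth flagging is the last sentence: the quantity $\sqrt{C_1 T\beta_T\gamma_T}$ is really the \emph{cumulative} regret bound $R_T = \sum_{t\le T} r_t$, and the simple regret $f(\bm{x}^*)-f(\bm{x}_T)$ is controlled by it only through the trivial inequality $r_T \le R_T$ (since each $r_t \ge 0$), not via the average-regret argument you allude to. That coarser step is what the lemma as stated actually asserts, so your conclusion is correct, but the ``dominated by the average per-round regret'' phrasing would give the tighter $\sqrt{C_1\beta_T\gamma_T/T}$, which is not what is claimed here.
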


Furthermore, note that any continuous function---including our cost function $J(\cdot)$---can be approximated to arbitrary precision by a function in the RKHS of a Mat\'ern kernel in the sense of Lemma \ref{lemma:approximation}.

We can now state the probabilistic completeness of our algorithm as follows:

\begin{theorem}\label{theorem:prob_complete}(Probabilistic Completeness). For sufficient $N$, Algorithm \ref{alg:overview} will find a control sequence $\mathbf{u}^*$ such that the resulting output trajectory $\mathbf{y}$ satisfies the specification $\varphi$ with robustness degree $\rho_{min}$, provided such a control sequence exists, with high probability. Specifically, for a given iteration $i \in [1,N]$, the error between the $i^{th}$ estimate $J(\mathbf{u}_i)$ and the optimal cost $J(\mathbf{u}^*)$ is bounded as follows:
\begin{equation}\label{eq:error_bound}
    P\Big(J(\mathbf{u}^*)-J(\mathbf{u}_i) \leq \sqrt{C_1i\beta_i\gamma_i} + 2\epsilon \Big) \geq 1-\delta
\end{equation}
for $\delta \in (0,1)$, $\epsilon > 0$, and $C_1$, $\beta_i$ as defined in Lemma \ref{lemma:convergence}.
\end{theorem}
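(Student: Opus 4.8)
The plan is to derive the bound (\ref{eq:error_bound}) from the two results already in hand: the universal approximation property of the Mat\'ern kernel (Lemma \ref{lemma:approximation}) and the GP-UCB regret bound of \citet{srinivas2012information} (Lemma \ref{lemma:convergence}). The conceptual difficulty is that Lemma \ref{lemma:convergence} applies to a target that genuinely lies in the RKHS $\mathcal{H}_k$, whereas Algorithm \ref{alg:overview} runs GP-UCB directly on the cost $J(\cdot)$, which in general does \emph{not} lie in $\mathcal{H}_k$; the approximation error will have to be carried through the argument as bounded observation noise.

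First I would establish the hypotheses of Lemma \ref{lemma:approximation}. Since $\mathcal{U}$ is compact, the optimization domain $\mathcal{U}^{N+1}$ over which $\mathbf{u}$ ranges is compact; and if $f$, $g$, and the predicate maps $\mu$ are continuous, then $\mathbf{y}$ depends continuously on $\mathbf{u}$ and $\rho^\varphi(\mathbf{y})$ — a finite composition of continuous maps with $\min$ and $\max$ — is continuous, so $J(\mathbf{u})=-\rho^\varphi(\mathbf{y})$ is continuous on the compact set $\mathcal{U}^{N+1}$ and attains its minimum at some $\mathbf{u}^*$. Fixing $\epsilon>0$, Lemma \ref{lemma:approximation} then yields $g\in\mathcal{H}_k$ with $|J(\mathbf{u})-g(\mathbf{u})|\le\epsilon$ for every $\mathbf{u}\in\mathcal{U}^{N+1}$; choosing $B\ge\|g\|_k^2$ fixes the schedule $\beta_i = 2B+300\gamma_i\log^3(i/\delta)$ used by the algorithm.

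Next I would transfer the regret bound from $g$ to $J$. View each evaluation made by Algorithm \ref{alg:overview} as a bounded-noise observation of $g$ rather than of $J$: with $\epsilon_i := J(\mathbf{u}_i)-g(\mathbf{u}_i)$ we have $|\epsilon_i|\le\epsilon$ almost surely. Applying Lemma \ref{lemma:convergence} to $-g$ (so that GP-UCB is run in its native maximization form, with $\|-g\|_k^2=\|g\|_k^2\le B$) gives, for $T=i$, that with probability at least $1-\delta$
\begin{equation*}
    g(\mathbf{u}_i) - g(\mathbf{u}^{*}_{g}) \le \sqrt{C_1\, i\, \beta_i\, \gamma_i},
\end{equation*}
where $\mathbf{u}^{*}_{g}=\argmin g$. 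On the same event, using $J(\mathbf{u}_i)\le g(\mathbf{u}_i)+\epsilon$ and $J(\mathbf{u}^*)\ge g(\mathbf{u}^*)-\epsilon\ge g(\mathbf{u}^{*}_{g})-\epsilon$, one obtains $J(\mathbf{u}_i)-J(\mathbf{u}^*)\le \big(g(\mathbf{u}_i)-g(\mathbf{u}^{*}_{g})\big)+2\epsilon \le \sqrt{C_1 i\beta_i\gamma_i}+2\epsilon$, which is the bound (\ref{eq:error_bound}). To upgrade this to the termination claim ``for sufficient $N$'', I would pass from the single-step statement of Lemma \ref{lemma:convergence} to its cumulative-regret form, so that the best iterate so far has suboptimality $O(\sqrt{\beta_i\gamma_i/i})$; since for the Mat\'ern kernel $\gamma_i$ is sublinear in $i$ and $\beta_i$ is polylogarithmic, this quantity vanishes, so first choosing $\epsilon$ and then $i$ large enough forces $J(\mathbf{u}_i)\le-\rho_{min}$ (using that the assumed satisfying control has $\sup_{\mathbf{u}}\rho^\varphi(\mathbf{y})>\rho_{min}$), at which point the algorithm returns.

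The step I expect to be the main obstacle is the reduction in the third paragraph. Lemma \ref{lemma:convergence} assumes the observation noise is zero-mean conditioned on the history, but $\epsilon_i=J(\mathbf{u}_i)-g(\mathbf{u}_i)$ is a fixed deterministic bias, not zero-mean noise. Making this rigorous requires either invoking a misspecified/agnostic version of the GP-UCB analysis (in which the target is merely $\epsilon$-close to $\mathcal{H}_k$), or re-examining the proof of Lemma \ref{lemma:convergence} to verify that its confidence-interval construction and regret accounting degrade only by an additive $O(\epsilon)$ under bounded adversarial perturbations of the observations — a plausible but non-trivial modification that is the real content behind (\ref{eq:error_bound}).
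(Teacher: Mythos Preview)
Your approach is essentially the same as the paper's: invoke Lemma~\ref{lemma:approximation} to get an RKHS surrogate $g$ (the paper calls it $\hat{J}$) that is $\epsilon$-close to $J$ on the compact domain of control tapes, apply Lemma~\ref{lemma:convergence} to the surrogate, and then recover the bound on $J$ by a triangle-inequality step that contributes the extra $2\epsilon$. Your treatment is in fact more careful than the paper's---you justify continuity of $J$ and compactness of the domain, handle the min/max sign flip explicitly, and distinguish the minimizer of $g$ from that of $J$---whereas the paper simply asserts the surrogate bound and passes to $J$ in one line.

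The obstacle you flag in your final paragraph is real, and the paper does not resolve it: the proof in the paper applies Lemma~\ref{lemma:convergence} to $\hat{J}$ as if GP-UCB were being run on $\hat{J}$ with zero-mean noise, when in fact the algorithm evaluates $J$ and the residual $J-\hat{J}$ is a deterministic, generally non-zero-mean perturbation. The paper's argument is therefore at the same level of rigor as your second paragraph and simply does not engage with the misspecification issue you raise; if you want a fully rigorous version you would indeed need an agnostic/misspecified GP-UCB analysis, but that goes beyond what the paper itself supplies.
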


\begin{proof}
    First, note that $J(\cdot)$ can be approximated with arbitrary precision by a function $\hat{J}(\cdot)$ in the RKHS of Mat\'ern kernel $k(\cdot,\cdot)$ in the sense of Lemma \ref{lemma:approximation}, as the control sequence $\mathbf{u}$ lies within the closed set 
    \begin{equation*}
        \mathcal{Z} = \big\{\mathbf{u}=[u_0,u_1,\dots,u_N] \mid u_i \in \mathcal{U} ~ \forall i \in [0,N]\big\}.
    \end{equation*}
    
    For small $\epsilon$, we can bound the error in $\hat{J}(\cdot)$ at the $i^{th}$ step by 
    \begin{equation*}
        P\Big(\hat{J}(\mathbf{u}^*)-\hat{J}(\mathbf{u}_i) \leq \sqrt{C_1i\beta_i\gamma_i} \Big) \geq 1-\delta
    \end{equation*}
    by Lemma \ref{lemma:convergence}.
    
    It then follows from the fact that $|J(\mathbf{u})-\hat{J}(\mathbf{u})| \leq \epsilon$ for $\mathbf{u} \in \mathcal{Z}$ that
    \begin{equation*}
        P\Big(J(\mathbf{u}^*)-J(\mathbf{u}_i) \leq \sqrt{C_1i\beta_i\gamma_i} + 2\epsilon \Big) \geq 1-\delta
    \end{equation*}
    for small $\epsilon$.

    Finally, we say that $\mathbf{y} \vDash \varphi$ with robustness degree $\rho_{min}$ if and only if $J(\mathbf{u}) \leq -\rho_{min}$. Since our algorithm converges to the true optimum $\mathbf{u}^* = \argmin_{\mathbf{u}}J(\mathbf{u})$ with high probability in the sense of (\ref{eq:error_bound}), the Theorem holds.  
\end{proof}

In short, this theorem states that our algorithm will converge to the globally optimal $\mathbf{u}^*$ as the number of iterations approaches infinity, and thus will eventually find a satisfying solution if one exists. 

In practical terms, Equation (\ref{eq:error_bound}) provides useful insight into how to pick the iteration upper bound $N$. Larger $N$ leads to an tighter bound on the final error between the optimal and the estimated cost function, and this bound can be computed by setting $i=N$ in Equation (\ref{eq:error_bound}).

Of course, there is a tradeoff between choosing a higher $N$, thus obtaining a more complete result, and the ensuing computational expense of performing more iterations of Algorithm \ref{alg:overview}. We now give a more thorough characterization of this tradeoff by quantifying the computational complexity of our approach, as shown in the following Theorem:

\begin{theorem}(Computational Complexity).
    The worst-case computational complexity of Algorithm \ref{alg:overview} is $\mathcal{O}(|\Pi|N(T+N)^3)$, where $|\Pi|$ is the number of predicates in STL formula $\varphi$, $T$ is the time bound of $\varphi$, and $N$ is the user-specified maximum number of iterations of Algorithm \ref{alg:overview}.
\end{theorem}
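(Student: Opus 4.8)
The plan is to charge the cost of Algorithm~\ref{alg:overview} line by line and then sum over the loop. Two kinds of work happen inside each iteration: (i) evaluating the true cost $J(\mathbf{u}_i)$, which means simulating System~(\ref{eq:system}) forward and computing the robustness degree $\rho^\varphi(\mathbf{y})$ of the resulting output; and (ii) the Bayesian-optimization bookkeeping, i.e.\ optimizing the UCB acquisition function and updating the GP surrogate $\hat{J}$. I would bound each of these as a function of the iteration index $i$, sum for $i=1,\dots,N$, and add the one-time cost of building $\hat{J}$ from the initial guesses; the early-exit test $J(\mathbf{u}_i)\le-\rho_{min}$ only shortens the loop, so the worst case is exactly $N$ iterations.

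First I would bound one evaluation of $J$. Because $\varphi$ has time bound $T$, its robust value is determined by a signal of length $\mathcal{O}(T)$, so a forward rollout of $f$ and $g$ costs $\mathcal{O}(T+N)$ (the horizon is at most the iteration budget) under the standing assumption that $f$ and $g$ evaluate in $\mathcal{O}(1)$ time. The robustness degree is then computed bottom-up over the parse tree of $\varphi$: each of the $|\Pi|$ atomic predicates contributes its value at each relevant time, and the boolean and temporal recursions of the robust semantics combine these by $\min$/$\max$ over time windows of width at most $T$. Carried out with the standard sliding-window evaluation, this takes time linear in $|\Pi|$ and polynomial in $T$, so a single evaluation of $J$ is $\mathcal{O}\big(|\Pi|\cdot\mathrm{poly}(T+N)\big)$, which is bounded by $\mathcal{O}\big(|\Pi|(T+N)^3\big)$.

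Next I would bound the Bayesian-optimization work at iteration $i$. Seeding the GP with on the order of $T$ initial guesses, the surrogate $\hat{J}$ is conditioned on $\mathcal{O}(T+i)$ observations after $i$ iterations, so forming its posterior mean and variance requires factorizing an $\mathcal{O}(T+i)\times\mathcal{O}(T+i)$ kernel matrix at cost $\mathcal{O}\big((T+i)^3\big)$; evaluating the UCB at a candidate point is then $\mathcal{O}\big((T+i)^2\big)$, and the argmax over the fixed-dimensional control space $\mathcal{Z}$ contributes only lower-order factors. Hence the per-iteration cost is dominated by $\mathcal{O}\big((T+N)^3\big)$.

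Summing the two contributions over the loop gives $\sum_{i=1}^{N}\big[\mathcal{O}(|\Pi|(T+N)^3)+\mathcal{O}((T+N)^3)\big]=\mathcal{O}\big(|\Pi|\,N\,(T+N)^3\big)$, which also absorbs the $\mathcal{O}(T^3)$-scale initialization, and this is the claimed bound. The step I expect to be most delicate is the ``polynomial in $T$'' claim for the robustness computation: a naive recursive evaluation of nested $\mathbf{U}_{[t_1,t_2]}$ operators can be super-polynomial, so one must appeal to the sliding-window-minimum trick (or an equivalent dynamic-programming argument) to keep that factor polynomial. This is the only subtle point, though, because any polynomial-in-$T$ bound there is dominated by the cubic kernel-matrix factorization contributed by the GP, so the headline complexity is insensitive to the precise accounting of the robustness computation.
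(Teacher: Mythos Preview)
Your argument is correct and follows essentially the same decomposition as the paper: the cubic-in-$(T+N)$ GP update repeated $N$ times dominates, and the evaluation of $J(\mathbf{u})$ contributes the linear factor $|\Pi|$. You are more careful than the paper on two points: you give an explicit reason for the $T$ in the kernel-matrix dimension (you attribute it to $\mathcal{O}(T)$ initial guesses, whereas the paper simply asserts an $m(T+i)\times m(T+i)$ covariance matrix), and you flag the sliding-window dynamic-programming argument needed to keep the robustness computation polynomial in $T$, which the paper dismisses with ``can be easily seen from the STL robust semantics.'' These refinements only strengthen the same bound; the overall strategy is identical.
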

\begin{proof}
    Algorithm \ref{alg:overview} requires inverting a $m(T+i)\times m(T+i)$ covariance matrix to update the Gaussian Process estimate $\hat{J}(\cdot)$ at each step $1 \leq i \leq N$. This is an $\mathcal{O}((T+i)^3)$ process which is repeated $N$ times, and thus we can bound the complexity with $\mathcal{O}(N(T+N)^3)$. 
    
    In terms of the number of predicates $|\Pi|$, note that the size of this covariance matrix is independent of the structure of $J(\cdot)$, and thus of $|\Pi|$. The only way the structure of $J(\cdot)$ enters the picture is in evaluating $J(\mathbf{u})$. The complexity of each evaluation depends linearly on $|\Pi|$, a fact that can be easily seen from the STL robust semantics. 
\end{proof}

\subsection{Initialization}\label{sec:initialization}

While our approach is probabilistically complete and polynomial in complexity, the practical run-time of Algorithm \ref{alg:overview} depends heavily on the choice of initial control candidates $\{\mathbf{u}_0\}$. In this section, we discuss the choice of such initial candidates and describe how heuristic optimization methods like Differential Evolution \cite{storn1997differential} can significantly improve the performance of our method in practice. 

Since Algorithm \ref{alg:overview} relies on Gaussian Process regression, the time complexity is cubic in the number of samples of the cost function $J(\mathbf{u})$. Furthermore, the space of all possible control sequences is potentially huge for large time bounds. Finding a suitable optimum over such a large space inevitably requires many samples of $J(\mathbf{u})$ unless we have a good initial guess, or set of guesses, about where the optimum may be. 

\begin{figure*}
    \begin{subfigure}{0.49\linewidth}
        \includegraphics[width=\linewidth]{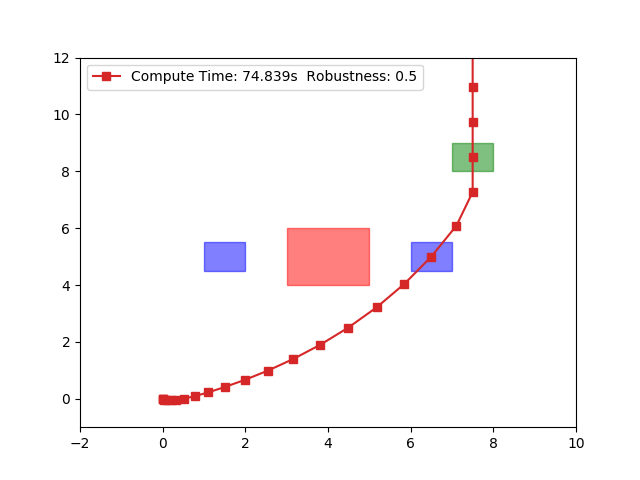}
        \caption{Mixed Integer Programming (state-of-the-art)}
        \label{fig:milp_trajectory}
    \end{subfigure}
    \begin{subfigure}{0.49\linewidth}
        \includegraphics[width=\linewidth]{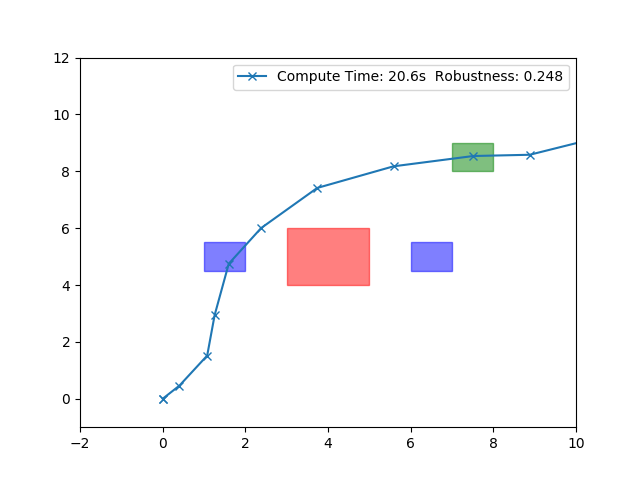}
        \caption{Bayesian Optimization with DE initialization (ours)}
        \label{fig:our_trajectory}
    \end{subfigure}
    \begin{subfigure}{0.49\linewidth}
        \includegraphics[width=\linewidth]{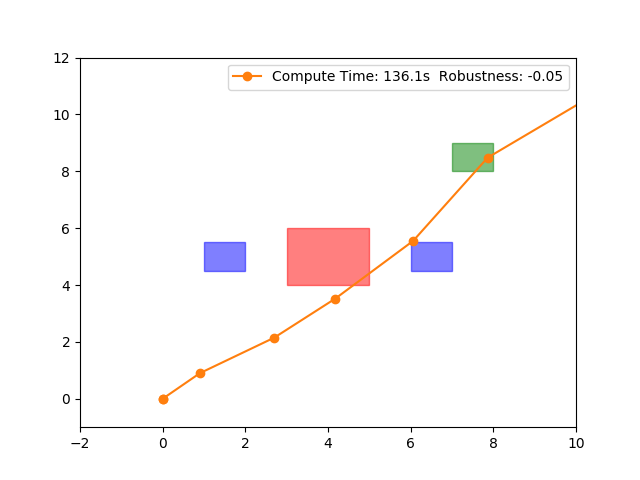}
        \caption{Bayesian Optimization Only}
        \label{fig:bayes_trajectory}
    \end{subfigure}
    \begin{subfigure}{0.49\linewidth}
        \includegraphics[width=\linewidth]{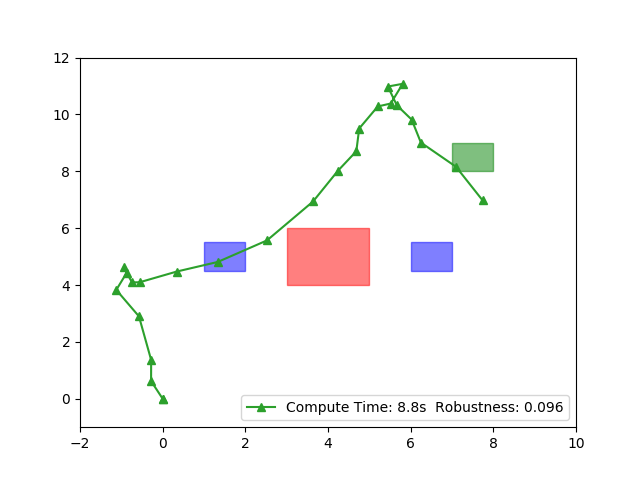}
        \caption{Differential Evolution Only}
        \label{fig:de_trajectory}
    \end{subfigure}
    \caption{Examples of trajectories generated by different optimization approaches to satisfy specification \ref{eq:specification}. A robot starting at $(0,0)$ must visit the green region and one of the blue regions while avoiding the central red region. Our approach (\ref{fig:our_trajectory}) provides a superior balance between computational complexity and robustness. }
    \label{fig:trajectories}
\end{figure*}

We propose using Differential Evolution \cite{storn1997differential} to find such a set of initial guesses $\{\mathbf{u}_0\}$. Differential Evolution (DE) is a genetic-type heuristic algorithm for maximizing an unknown function. The basic idea is that a set of candidate solutions is updated through simple mutation and combination operations at each iteration, gradually converging towards the global optimum. While DE does not provide theoretical convergence guarantees, it has been shown to be effective on a wide variety of difficult and high-dimensional problems.

Furthermore, combining candidate solutions to generate new solutions makes intuitive sense in the context of STL trajectory optimization: one part of a certain trajectory may be good for achieving a certain sub-task, while another trajectory may have components that advance another part of the specification. DE is fast, and several initial guesses can be easily computed in parallel. Furthermore, we have found that even running DE for a few ($<10$) iterations can provide reasonable initial guesses. The lack of convergence guarantees is not a significant concern in this initialization phase, as the subsequent Bayesian optimization is guaranteed to converge with high probability, as demonstrated by Theorem \ref{theorem:prob_complete}.   

\section{Example}\label{sec:example}

\begin{figure}
    \centering
    \includegraphics[width=\linewidth]{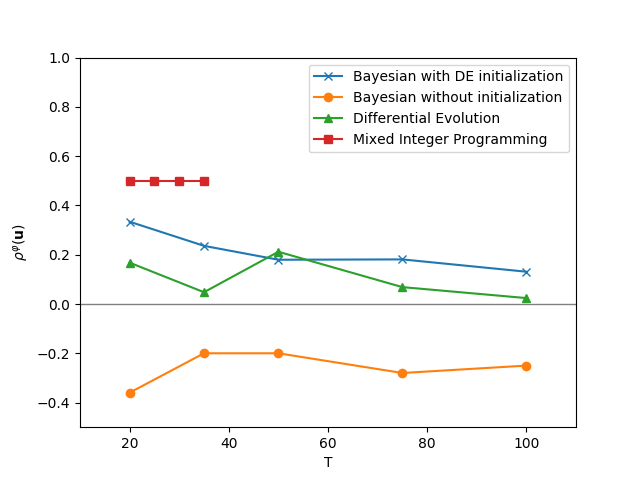}
    \caption{Robustness degree vs STL time bound.}
    \label{fig:robustness}
\end{figure}

\begin{figure}
    \centering
    \includegraphics[width=\linewidth]{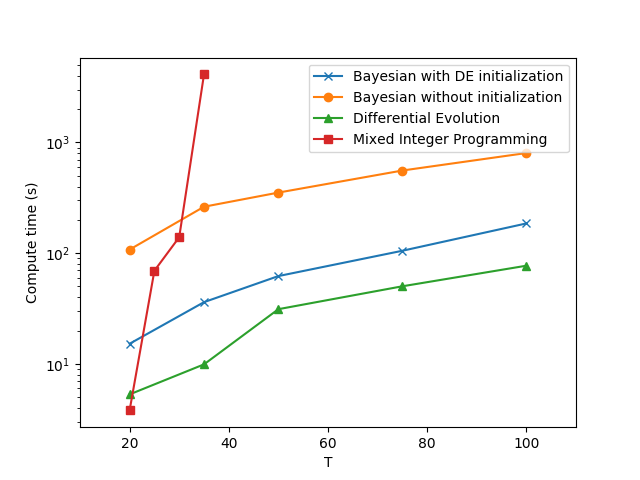}
    \caption{Computation time vs STL time bound (log scale).}
    \label{fig:computation}
\end{figure}

In this section, we present an example of using our approach to synthesize trajectories that satisfy an STL specification. 

Consider a robot with double integrator dynamics moving in a 2D workspace. The system is given as follows:
\begin{equation}
    \begin{aligned}
        x_{t+1} & =
        \begin{bmatrix}
            1 & 1 & 0 & 0 \\
            0 & 1 & 0 & 0 \\
            0 & 0 & 1 & 1 \\
            0 & 0 & 0 & 1 \\
        \end{bmatrix}
        x_t +
        \begin{bmatrix}
            0 & 0 \\
            1 & 0 \\
            0 & 0 \\
            0 & 1 \\
        \end{bmatrix}
        u_t \\
    y_t & = \begin{bmatrix}
        x^1_t &
        x^3_t &
        u^1_t &
        u^2_t
    \end{bmatrix}^\intercal
    \end{aligned}
\end{equation}
where $x \in \mathbb{R}^4$ represents the position and velocity of the robot, and $u \in \mathbb{R}^2$ represents a velocity control input. 

The STL formula $\varphi$ specifies that the robot should visit one of two waypoint regions and a goal region within $T$ steps, always avoid an obstacle, and keep the control inputs bounded in $[u_{min},u_{max}]$:
\begin{multline}\label{eq:specification}
    \varphi = \mathbf{F}_{[0,T]}(\text{waypoint1}\lor \text{waypoint2})\land\mathbf{F}_{[0,T]}(\text{goal}) \\
    \land 
    \mathbf{G}_{[0,T]}(\lnot\text{obstacle})
    \land \mathbf{G}_{[0,T]}(\text{control bounded})
\end{multline}
This specification is displayed in Figure \ref{fig:trajectories}. The waypoint regions are shown in blue and the goal region is shown in green, while the central obstacle region is shown in red.

Note that specification (\ref{eq:specification}) is non-convex, as it contains a disjuction. This means that related optimization approaches like gradient-descent \cite{abbas2014functional} and RRT* \cite{dey2016fast} are not guaranteed to find a solution. 

We tested four optimization techniques on this problem: mixed integer programming, our approach, differential evolution only, and Bayesian optimization without DE initialization. Example results for $T=25$ are shown in Figure \ref{fig:trajectories}. We implemented Algorithm \ref{alg:overview} in Python using scipy \cite{scipy} and skopt \cite{scikit-optimize}, and adopted the MILP implementation of \cite{sadraddini2015robust} for comparison. All experiments were performed on a laptop with an Intel i7 processor and 32GB RAM. 

MILP (Figure \ref{fig:milp_trajectory}) finds a trajectory that achieves the optimal robustness degree of $0.5$. Qualitatively, the trajectory is smooth and reasonable, avoiding the obstacle by a wide margin and passing through the center of the waypoint and goal regions. But this approach takes a long time, over a minute, to compute a satisfying trajectory.

At another extreme, differential evolution (Figure \ref{fig:de_trajectory}) finds a satisfying run very quickly (under 10 seconds), but the resulting robustness degree ($0.096$) is low. Qualitatively, we can see that this trajectory is clearly suboptimal: the trajectory is erratic and barely clips the corner of the goal region, though it does satisfy the specification. 

Using Bayesian optimization only (Figure \ref{fig:bayes_trajectory}) demonstrates the slow convergence discussed in Section \ref{sec:initialization}. In this example, the maximum number of iterations $N=100$ was exceeded before a satisfying trajectory could be found, resulting in a robustness degree of $-0.05$. We can see qualitatively, however, that this approach is on the way towards a smooth satisfying trajectory.

Finally, our approach (Figure \ref{fig:our_trajectory}) demonstrates the best balance between robustness and computational speed. It takes under 30 seconds to find a satisfying trajectory, slower only than differential evolution alone. At the same time, the final trajectory is much better quality, achieving a robustness degree of $0.248$. Qualitatively, the resulting trajectory is similar to the optimal one generated by MILP---smooth, far from the obstacle, passing near the center of the waypoint and goal---but was computed in less than a third of the time. 

By varying $T$, we can investigate the effect of larger time bounds on the optimization efficiency. Note that $T$ can also be thought of as a ``stand-in'' for a parameter that varies the complexity of the specification, as the MILP approach scales exponentially with both $T$ and the number of predicates in $\varphi$.

The results shown in Figures \ref{fig:robustness} and \ref{fig:computation} further illustrate that our approach provides a good balance between speed and optimality. The MILP approach always achives the maximum robustness degree ($0.5$), but its exponential complexity renders specifications with $T > 35$ infeasible. Using Differential Evolution directly is the fastest approach, but there is no completeness guarantee. Bayesian optimization without initialization (i.e. with a random initial guess), meanwhile, does not converge to a satisfying solution in the iteration bound provided ($N=100$). Our approach, on the other hand, finds a solution quickly and with provable convergence. While our approach, as might be expected, is slightly slower than differential evolution alone, this added computation time results in greater robustness.

\section{Conclusion}\label{sec:conclusion}

We presented a new approach for STL trajectory synthesis based on Bayesian Optimization. We proved that this approach is sound, probabilistically complete, and highly scalable. Specifically, our approach scales polynomially in the STL time bound and linearly in the number of predicates. On a practical level, we discussed the importance of providing a good initial guess for Bayesian Optimization. To this end, we proposed the use of Differential Evolution, a heuristic global optimization algorithm, to generate a useful set of initial conditions. We then showed in a practical example how our approach achieves superior performance to differential evolution alone, Bayesian Optimization without initialization, and the state-of-the-art MILP encoding. 

Future work will focus on developing feedback control strategies, extending these results to uncertain systems and unbounded specifications, and exploring more scalable Bayesian Optimization frameworks such as neural networks \cite{snoek2015scalable}.  

\bibliographystyle{plainnat}
\bibliography{references}

\end{document}